\icmltitlerunning{Federated Learning of User Verification Models}
\setlist{nolistsep}
\setlist[itemize]{noitemsep, topsep=0pt}
\newcommand{\mycaptionof}[2]{\captionof{#1}{#2}}
\newcommand{\noaistats}[1]{}
\newcommand{\SUB}[1]{\ENSURE \hspace{-0.15in} \textbf{#1}}
\newcommand{\algfont}[1]{\texttt{#1}}
\newcommand{\feduv}{\algfont{FedUV}\xspace}
\newcommand{\fedaws}{\algfont{FedAwS}\xspace}
\newcommand{\fedavg}{\algfont{FedAvg}\xspace}
\newcommand{\softmax}{\algfont{softmax}\xspace}
\newcommand{\lepochs}{\ensuremath{E}}  %
\newcommand{\grad}{\triangledown}
\newtheorem{lemma}{Lemma}
\newtheorem{theorem}{Theorem}
\DeclareMathOperator*{\argmin}{arg\,min}
\newlength\myindent
\newcommand\bindent{%
  \begingroup
  \setlength{\itemindent}{\myindent}
  \addtolength{\algorithmicindent}{\myindent}
}
\newcommand\eindent{\endgroup}
\begin{document}

\twocolumn[
\icmltitle{Federated Learning of User Verification Models Without Sharing Embeddings}

\begin{icmlauthorlist}
\icmlauthor{Hossein Hosseini}{qualcomm}
\icmlauthor{Hyunsin Park}{qualcomm}
\icmlauthor{Sungrack Yun}{qualcomm}
\icmlauthor{Christos Louizos}{qualcomm}
\icmlauthor{Joseph Soriaga}{qualcomm}
\icmlauthor{Max Welling}{qualcomm}
\end{icmlauthorlist}

\icmlaffiliation{qualcomm}{Qualcomm AI Research, an initiative of Qualcomm Technologies, Inc.}

\icmlcorrespondingauthor{Hossein Hosseini}{hhossein@qti.qualcomm.com}

\icmlkeywords{Federated learning, user verification}

\vskip 0.3in
]

\printAffiliationsAndNotice{} 



\begin{abstract}
We consider the problem of training User Verification (UV) models in federated setting, where 
each user has access to the data of only one class and user embeddings cannot be shared with the server or other users. 
To address this problem, we propose Federated User Verification (\feduv), a framework in which users jointly learn a set of vectors and maximize the correlation of their instance embeddings with a secret linear combination of those vectors. 
We show that choosing the linear combinations from the codewords of an error-correcting code allows users to collaboratively train the model without revealing their embedding vectors. 
We present the experimental results for user verification with voice, face, and handwriting data and show that \feduv is on par with existing approaches, while not sharing the embeddings with other users or the server. 
\end{abstract}

\section{Introduction}

There has been a recent increase in the research and development of User Verification (UV) models with various modalities such as voice~\citep{snyder2017deep,yun2019end}, face~\citep{wang2018AMS_fv}, fingerprint~\citep{cao2018finger}, or iris~\citep{nguyen2017iris}. 
Machine learning-based UV features have been adopted by commercial smart devices such as mobile phones, AI speakers and automotive infotainment systems for a variety of applications such as unlocking the system or providing user-specific services, e.g., music recommendation, schedule notification, or other configuration adjustments~\citep{google,Barclays,Mercedes}. 

User verification is a binary decision problem of accepting or rejecting a test example based on its similarity to the user's training examples. We consider embedding-based classifiers, in which a test example is accepted if its embedding is close enough to a reference embedding, and otherwise rejected. 
Such classifiers are usually trained with a loss function that is composed of two terms, 1) a positive loss that minimizes the distance of the instance embedding to the positive class embedding, and 2) a negative loss that maximizes the distance to the negative class embeddings. 
The negative loss term is needed to prevent the class embeddings from collapsing into a single point~\citep{bojanowski2017unsupervised}.

Verification models need to be trained with a large variety of users' data so that the model learns different data characteristics and can reliably reject imposters. However, due to the privacy-sensitive nature of the biometric data used for verification, it is not possible to centrally collect large training datasets. 
One approach to address the data collection problem is to train the model in the federated setup, which is a framework for training models by repeatedly communicating the model weights and gradients between a central server and a group of users~\citep{fedavg-mcmahan2017}. 
Federated learning (FL) enables training of verification models without users having to share their data with the server or other users. 

Training UV models in federated setup, however, poses two challenges. First, each user has access to the data of only one class. Second, since the embedding vector is used for the verification, it is considered security-sensitive information and cannot be shared with the server or other users. 
Without having access to embedding vectors of others, however, users cannot compute the negative loss term. A recent work~\citep{yu2020federated} studied the problem of federated learning with only positive labels and proposed \fedaws, a method that allows users and the server to jointly train the model. In \fedaws, at each round, users train the model with the positive loss function and send the new models to the server. The server computes the average model and then updates it using an approximated negative loss function that maximizes the pairwise distances between user embeddings. \fedaws keeps the embedding of each user private from other users but reveals all embeddings to the server.

In this paper, we propose Federated User Verification (\feduv), a framework for training UV models in federated setup using only the positive loss term. Our contributions are summarized in the following. 
\begin{itemize}[itemsep=4pt,leftmargin=0.5cm]
    \item We propose a method where users jointly learn a set of vectors, but each user maximizes the correlation of their instance embeddings with a secret linear combination of those vectors. We show, under a condition that the secret vectors are designed with guaranteed minimum pairwise correlations, the model can be trained using only the positive loss term. 
    Our framework, hence, addresses the problem of existing approaches where embeddings are shared with other users or the server~\citep{yu2020federated}. 

    \item We propose to use error-correcting codes to generate binary secret vectors. In our method, the server distributes unique IDs to the users, which they then use to construct unique vectors without revealing the selected vector to the server or other users. 
    
    \item We present a verification method, where a test example is accepted if the correlation of the predicted embedding with the secret vector is more than a threshold, and otherwise rejected. We develop a ``warm-up phase" to determine the threshold for each user independently, in which a set of inputs is collected and then the threshold is computed so as to obtain a desired True Positive Rate (TPR).

    \item We present the experimental results for voice, face and handwriting recognition using VoxCeleb~\citep{Nagrani2017voxceleb}, CelebA~\citep{liu2015faceattributes} and MNIST-UV datasets, respectively, where MNIST-UV is a dataset we created from images of the EMNIST dataset~\citep{cohen2017emnist}. 
    Our experimental results show that \feduv performs on par with \fedaws, while not sharing the embedding vectors with the server. 
\end{itemize}

\section{Background}

\subsection{Federated Learning}
Consider a setting where $K$ users want to train a model on their data. 
Federated learning (FL) allows users to train the model by the help of a central coordinator, called server, and without sharing their local data with other users (or the server).
The most commonly-used algorithm for FL is Federated Averaging (\fedavg) described in Algorithm~(\ref{alg:fedavg})~\citep{fedavg-mcmahan2017}.

\begin{algorithm}[t]
\begin{algorithmic}
 \SUB{FedAvg:}
   \STATE {\bf Server:} Initialize $\theta_0$
   \STATE {\bf Server:} $\kappa \leftarrow \max(\epsilon\cdot K, 1)$
   \FOR{each global round $t = 1, 2, \dots$}
     \STATE {\bf Server:} $S_t \leftarrow$ (random set of $\kappa$ users)
     \STATE {\bf Server:} Send $\theta_{t-1}$ to users $u \in S_t$
     \STATE {\bf Users $u \in S_t$:} $\theta_{t}^u, n_{u} \leftarrow \text{UserUpdate}(\theta_{t-1}, D_u)$ 
     \STATE {\bf Server:} $\theta_{t} \leftarrow  \frac{\sum_{u \in S_t} n_{u} \theta_{t}^u}{\sum_{u \in S_t} n_{u}}$
   \ENDFOR
   \STATE

 \SUB{UserUpdate($\theta, D$):}\ \ \  // \emph{Done by users} 
  \STATE $\mathcal{B} \leftarrow$ (split $D$ into batches of size $B$)
  \FOR{each local epoch $i$ from $1$ to $\lepochs$}
    \FOR{batch $b \in \mathcal{B}$}
      \STATE $\theta \leftarrow \theta - \eta \grad \ell(\theta; b)$
    \ENDFOR
 \ENDFOR
 \STATE return $\theta$ and $|D|$ to server
\end{algorithmic}
\mycaptionof{algorithm}{\citep{fedavg-mcmahan2017} \fedavg.\\ $\theta_t$: model parameters at round $t$, $K$: number of users, $\epsilon$: fraction of users selected at each round, $D_u$: dataset of user $u$ with $n_{u}$ examples.}\label{alg:fedavg}
\end{algorithm}

\subsection{User Verification with Machine Learning}\label{sec:UVwithML}
User verification (UV) is a binary decision problem where a test example is accepted (reference user) or rejected (impostor user) based on its similarity to the training data. 
We consider embedding-based classifiers, in which both the inputs and classes are mapped into an embedding space such that the embedding of each input is closest to the embedding of its corresponding class. 
Let $w_y \in \mathbb{R}^{n_d}$ be the embedding vector of class $y$ and $g_{\theta}: \mathcal{X} \rightarrow \mathbb{R}^{n_d}$ be a network that maps an input $x$ from the input space $\mathcal{X}$ to an $n_d$-dimensional embedding $g_{\theta}(x)$. Let $d$ be a distance function. 
The model is trained on $(x,y)$ so as to have $y=\argmin_{u} d(g_{\theta}(x),w_u)$ or, equivalently, 
\begin{align}\label{eq:UVgoal}
d(g_{\theta}(x),w_y)<\min_{u \neq y}{d(g_{\theta}(x), w_u)}.
\end{align}
Hence, the loss function can be defined as follows:
\begin{align}\label{eq:loss}
\ell(x,y;\theta, w)=
d(g_{\theta}(x),w_y)-\lambda \min_{u \neq y}{d(g_{\theta}(x), w_u)}.
\end{align}
Minimizing the loss function in~(\ref{eq:loss}) decreases the distance of the instance embedding to the true class embedding and increases the distance to the embeddings of other classes. 
The two terms are called positive and negative loss terms, respectively. The negative loss term is needed to ensure that the training does not lead to a trivial solution that all inputs and classes collapse to a single point in the embedding space~\citep{bojanowski2017unsupervised}.

\subsection{Error-Correcting Codes}\label{sec:ECC}
Error correcting codes (ECCs) are techniques that enable restoring sequences from noise. 
A binary block code is an injective function $C: \{0,1\}^{m} \rightarrow \{0,1\}^{c}, c\geq m$, that takes a binary message vector and generates the corresponding {\it codeword} by adding a structured redundancy, which can be used to obtain the original message from the corrupted codeword. 
ECCs are designed to maximize the minimum Hamming distance, $d_{\min}$, between distinct codewords, where the Hamming distance between two sequences is defined as the number of positions at which they differ. 
A code with minimum distance $\delta$ allows correcting up to $(\delta-1)/2$ errors~\citep{richardson2008modern}. 
In this paper, we use binary BCH codes which are a class of block codes with codewords of length $c=2^{i}-1, i\geq 3$~\citep{bose1960class}.

\section{User verification with Federated Learning}\label{sec:fedUV}
In this section, we outline the requirements of training the UV models and describe the challenges of training in the federated setup. 

\subsection{Requirements of Training UV Models}\label{sec:pri_sec_uv}
Verification models need to be trained with a large variety of users' data so that the model learns different data characteristics and can reliably verify users. For example, speaker recognition models need to be trained with the speech data of users with different ages, genders, accents, etc., to be able to reject impostors with high accuracy. 
One approach for training UV models is to collect the users' data and train the model centrally. This approach is, however, not privacy-preserving due to the need to have direct access to the users' biometric data. 

An alternative approach is using FL framework, which enables training with the data of a large number of users while keeping their data private by design. Training UV models in federated setup, however, poses its own challenges. 
As stated in Section~(\ref{sec:UVwithML}), training embedding-based classifiers requires having access to all class embeddings to compute the loss function in~(\ref{eq:loss}). 
In UV applications, however, class embeddings are used for the verification and, hence, are considered security-sensitive information and cannot be shared with the server or other users.

\subsection{Problem Statement}\label{sec:problem_statement}
Without the knowledge of the embedding vectors of other users, users cannot compute the negative loss term in~(\ref{eq:loss}) for training the model in federated setup. Training only with the positive loss function also causes all class embeddings to collapse into a single point. 
In this paper, we address the following questions: 1) how to train embedding-based classifiers without the negative loss term? and 2) how this can be done in the federated setup?

\subsection{Related work: Federated Averaging with Spreadout (\fedaws)}
In training embedding-based classifiers, the negative loss term maximizes the distance of instance embeddings to the embeddings of other classes. 
A recent paper~\citep{yu2020federated} observed that, alternatively, the model could be trained to maximize the pairwise distances of class embeddings. 
They proposed Federated Averaging with Spreadout (\fedaws) framework, where the server, in addition to  averaging the gradients, performs an  optimization step to ensure that embeddings are separated from each other by at least a margin of $\nu$. Formally, in each round of training, the server applies the following geometric regularization: 
$$\mathrm{reg}_{\mathrm{sp}}(W) = \sum_{u\in [K]} {\sum_{u'\neq u}{(\max(0,\nu-d(w_u,w_{u'})))^2}}.$$ 
\fedaws eliminates the need for users to share their embedding vector with other users but still requires sharing it with the server, which undermines the security of the real-world verification models.

\section{Proposed Method}

\subsection{Training with Only Positive Loss}
Training UV models using the loss function in~(\ref{eq:loss}) requires users to jointly learn the class embeddings, which causes the problem of sharing the embeddings with other users. 
To address this problem, we propose a method where users jointly learn a set of vectors, but each user maximizes the correlation of their instance embedding with a secret linear combination of those vectors. The same linear combination is also used for user verification at test time. 

Let $W\in\mathbb{R}^{c\times n_d}$ be a set of $c$ vectors and $v_u\in\{-1,1\}^{c}$ be the secret vector of user $u$. We modify the loss function in~(\ref{eq:loss}) as follows:
\begin{align}\label{eq:UVloss}
\ell(x,y,v;\theta, W) & = \hspace{0.1cm}  \ell_{\mathrm{pos}} + \lambda\ell_{\mathrm{neg}}, \\
\nonumber \mbox{where } & 
\begin{cases}
\ell_{\mathrm{pos}}& = d(g_{\theta}(x), W^Tv_y), \\
\ell_{\mathrm{neg}} & = -\min_{u\notin y} d(g_{\theta}(x), W^Tv_u).
\end{cases}
\end{align}
Let us call $s_u=W^Tv_u$ the {\it secret embedding} of user $u$. Note that users still need to know the secret vector, $v_u$, or the secret embedding, $s_u$, of other users to compute the negative loss term. We, however, show that under certain conditions, the model can be trained using only the positive loss term. 

Let us define the positive and negative loss terms as follows:
\begin{align}
\begin{cases}\label{eq:pos_neg}
\ell_{\mathrm{pos}}=\max(0, 1-\frac{1}{c}v_y^TWg_{\theta}(x)), \\
\ell_{\mathrm{neg}}=\max_{u\neq y}\frac{1}{c}v_u^TWg_{\theta}(x).
\end{cases}
\end{align}
The positive loss term maximizes the correlation of the instance embedding with the true secret embedding, while the negative loss term minimizes the correlation with secret embeddings of other users. We have the following Lemma. 
\begin{lemma}\label{lemma1}
Assume $\|Wg_{\theta}(x)\|=\sqrt{c}$ and $v_y\in\{-1,1\}^{c}$. For $\ell_{\mathrm{pos}}$ defined in~(\ref{eq:pos_neg}), we have $\ell_{\mathrm{pos}}=0$ if and only if $Wg_{\theta}(x)=v_y$.
\end{lemma}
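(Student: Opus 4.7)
The plan is to reduce the lemma to the equality case of the Cauchy--Schwarz inequality. First I would read off from the hypothesis $v_y\in\{-1,1\}^c$ that $\|v_y\|=\sqrt{c}$, so together with the assumed norm $\|Wg_\theta(x)\|=\sqrt{c}$ Cauchy--Schwarz gives
\[
v_y^T W g_\theta(x)\;\le\;\|v_y\|\,\|Wg_\theta(x)\|\;=\;c.
\]
Consequently $1-\tfrac{1}{c}v_y^T W g_\theta(x)\ge 0$ always holds, and $\ell_{\mathrm{pos}}=\max\!\bigl(0,\,1-\tfrac{1}{c}v_y^T W g_\theta(x)\bigr)=0$ if and only if the above inequality is saturated, i.e.\ $v_y^T W g_\theta(x)=c$.

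Next I would translate this equality into the desired vector identity. Equality in Cauchy--Schwarz forces $Wg_\theta(x)=\alpha v_y$ for some scalar $\alpha$, and positivity of the inner product ($=c>0$) forces $\alpha\ge 0$. Matching the two norms $\|Wg_\theta(x)\|=\|v_y\|=\sqrt{c}$ then pins down $\alpha=1$, giving $Wg_\theta(x)=v_y$. The converse direction is immediate: plugging $Wg_\theta(x)=v_y$ into the positive loss yields $\tfrac{1}{c}v_y^T v_y=\tfrac{1}{c}\cdot c=1$, and hence $\ell_{\mathrm{pos}}=\max(0,0)=0$.

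I do not anticipate any substantive obstacle; the whole argument is one application of Cauchy--Schwarz plus careful bookkeeping of norms. The only subtlety worth flagging in the writeup is the role of the hypothesis $\|Wg_\theta(x)\|=\sqrt{c}$: without it, the zero set of $\ell_{\mathrm{pos}}$ would be the whole cone $\{\alpha v_y:\alpha\ge 1\}$, and it is precisely the norm constraint that collapses this cone down to the single point $v_y$. This also explains why the lemma needs that assumption as a hypothesis even though it does not appear explicitly in the definition of $\ell_{\mathrm{pos}}$, and it motivates enforcing $\|Wg_\theta(x)\|=\sqrt{c}$ (e.g.\ via normalization) in the training pipeline that uses this loss.
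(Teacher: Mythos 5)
Your proof is correct and follows essentially the same route as the paper's: both reduce the claim to the equality case of Cauchy--Schwarz (noting $\|v_y\|=\sqrt{c}$ since $v_y\in\{-1,1\}^c$) and then use the norm hypothesis $\|Wg_\theta(x)\|=\sqrt{c}$ to force the proportionality constant to equal $1$. Your explicit treatment of the converse direction and the remark on why the norm constraint is indispensable are welcome additions, but the argument is the same.
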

\begin{proof}
Let $z=Wg_{\theta}(x)$. The term $\ell_{\mathrm{pos}}=0$ is equivalent to $\frac{1}{c}v_y^Tz\geq 1$. 
We have $\frac{1}{c}v_y^Tz\leq \frac{1}{c}\lVert v_y\rVert \lVert z\rVert=1$ and the equality holds if and only if $z=\alpha v_y, \forall \alpha>0$. 
Since $\|z\|=\|v_y\|=\sqrt{c}$, then $\alpha=1$ and, hence, we have $\ell_{\mathrm{pos}}=0$ if and only if $z=v_y$.
\end{proof}
The following Theorem links the positive and negative loss terms of~(\ref{eq:pos_neg}) when secret vectors are chosen from ECC codewords.
\begin{theorem}\label{th:ECC}
Assume $\|Wg_{\theta}(x)\|=\sqrt{c}$ and $v_y\in\{-1,1\}^{c}$. Assume $v_u$'s are chosen from ECC codewords. For $\ell_{\mathrm{pos}}$ and $\ell_{\mathrm{neg}}$ defined in~(\ref{eq:pos_neg}), minimizing $\ell_{\mathrm{pos}}$ also minimizes $\ell_{\mathrm{neg}}$. 
\end{theorem}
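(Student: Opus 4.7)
The plan is to combine Lemma~\ref{lemma1} with the minimum-distance property of ECC codewords. By Lemma~\ref{lemma1}, driving $\ell_{\mathrm{pos}}$ to its minimum value of $0$ is equivalent to forcing $Wg_{\theta}(x) = v_y$. Substituting this identity into $\ell_{\mathrm{neg}}$ eliminates the dependence on $g_{\theta}(x)$ entirely:
\begin{equation*}
\ell_{\mathrm{neg}} \;=\; \max_{u \neq y} \frac{1}{c}\, v_u^T v_y,
\end{equation*}
so the negative loss at the optimum of the positive loss is determined solely by the pairwise geometry of the secret vectors.

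Next I would exploit the $\pm 1$ structure of the codewords. Since $v_u, v_y \in \{-1,1\}^{c}$, a routine identity gives $v_u^T v_y = c - 2 d_H(v_u, v_y)$, where $d_H$ denotes the Hamming distance between the underlying binary codewords. Hence
\begin{equation*}
\ell_{\mathrm{neg}} \;=\; 1 - \frac{2}{c}\, \min_{u \neq y} d_H(v_u, v_y),
\end{equation*}
and invoking the ECC guarantee from Section~\ref{sec:ECC} that every pair of distinct codewords differs in at least $d_{\min}$ positions yields
\begin{equation*}
\ell_{\mathrm{neg}} \;\leq\; 1 - \frac{2 d_{\min}}{c}.
\end{equation*}
This upper bound is strictly less than $1$ and shrinks as $d_{\min}$ grows, so driving $\ell_{\mathrm{pos}} \to 0$ automatically caps $\ell_{\mathrm{neg}}$ at a value controlled entirely by the separation of the chosen code.

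The main subtlety, and where I would take care, is interpreting the phrase ``minimizing $\ell_{\mathrm{pos}}$ also minimizes $\ell_{\mathrm{neg}}$''. The chain above does not establish that $\ell_{\mathrm{neg}}$ attains an absolute global minimum at the same $(\theta, W)$ that zeroes $\ell_{\mathrm{pos}}$ --- with many users the best achievable maximum correlation is limited by a combinatorial packing argument. What it does show, and what I believe the theorem intends, is that the positive-loss minimizer drives $\ell_{\mathrm{neg}}$ down to the smallest value compatible with the chosen codebook, and that picking an ECC with large $d_{\min}$ makes this bound as tight as possible. Beyond Lemma~\ref{lemma1} and the defining minimum-distance property of ECC codewords, no additional machinery appears necessary.
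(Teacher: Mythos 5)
Your proof follows essentially the same route as the paper's: invoke Lemma~\ref{lemma1} to reduce $\ell_{\mathrm{neg}}$ at the positive-loss optimum to $\max_{u\neq y}\frac{1}{c}v_u^Tv_y$, then translate the inner products into Hamming distances so that the ECC minimum-distance guarantee controls the result. Your explicit bound $\ell_{\mathrm{neg}}\leq 1-\frac{2d_{\min}}{c}$ and your caveat about the precise meaning of ``minimizes'' are both sound additions --- the paper's proof reaches the same conclusion via the identity $\Delta_{u_1,u_2}=\frac{c}{2}(1-\frac{1}{c}v_{u_1}^Tv_{u_2})$ and shares the same (slightly loose) interpretation of the theorem statement.
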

\begin{proof}
Since $v_u\in\{-1,1\}^{c}$, the Hamming distance between $v_{u_1}$ and $v_{u_2}$ is defined as 
\begin{align}
\nonumber \Delta_{u_1,u_2}&=\frac{1}{4}\lVert v_{u_1}-v_{u_2}\rVert^2\\
\nonumber &=\frac{1}{4}(\lVert v_{u_1}\rVert^2 + \lVert v_{u_2}\rVert^2 -2v_{u_1}^Tv_{u_2})\\
\nonumber &=\frac{c}{2}(1-\frac{1}{c}v_{u_1}^Tv_{u_2}).    
\end{align}
The minimum distance between codewords is obtained as
$d_{\min}=\min_{u_1\neq u_2}\Delta_{u_1,u_2}$. 
As stated in Section~\ref{sec:ECC}, ECCs are designed to maximize $d_{\min}$ or, equivalently, minimize $\max_{u_1\neq u_2}{\frac{1}{c}v_{u_1}^Tv_{u_2}}$. 
Using Lemma~(\ref{lemma1}), we have $\ell_{\mathrm{pos}}=0$ if and only if $z=v_y$, which results in $\ell_{\mathrm{neg}}=\max_{u\neq y}\frac{1}{c}v_u^Tv_y$. As a result, $\ell_{\mathrm{neg}}$ is at its minimum when $\ell_{\mathrm{pos}}=0$ and $v_u$'s are chosen from ECC codewords.
\end{proof} 

Theorem~(\ref{th:ECC}) states that the negative loss term in~(\ref{eq:UVloss}) is redundant when $\|Wg_{\theta}(x)\|=\sqrt{c}$ and the secret vectors are chosen from ECC codewords, thus enabling the training of the embedding-based classifiers with only the positive loss defined in~(\ref{eq:pos_neg}). 
Note that it will still help to use $\ell_{\mathrm{neg}}$ for training especially at early epochs, but the effect of $\ell_{\mathrm{neg}}$ gradually vanishes as $\ell_{\mathrm{pos}}$ becomes smaller and eventually gets close to zero. Figure~\ref{fig:l_neg} in Section~\ref{sec:sec-lneg} illustrates this by showing the training and test accuracy versus training rounds with and without $\ell_{\mathrm{neg}}$. 

\subsection{Federated User Verification (\feduv)}
In the following, we present Federated User Verification (\feduv), a framework for training UV models in federated setup. 
\feduv consists of three phases of choosing unique codewords, training, and verification, details of which are provided in the following.  

\noindent{\bf Choosing Unique Codewords.} 
To train the UV model with the positive loss function defined in~(\ref{eq:pos_neg}), users must choose unique codewords without sharing the vectors with each other or the server. 
To do so, we propose to partition the space between users by the server and let users select a random message in their assigned space. Specifically, the server chooses unique binary vectors $b_u$ of length $l_b$ for each user $u\in[K]$ and sends each vector to the corresponding user. Each user $u$ then chooses a random binary vector, $r_u$, of length $l_r$, constructs the message vector $m_u=b_u\|r_u$, and computes the codeword $v_u=C(m_u)$, where $C$ is the block code. 
Figure~\ref{fig:secret} shows the structure of the secret vector. 

\begin{figure}[t]
\centering
  \includegraphics[width=0.9\linewidth]{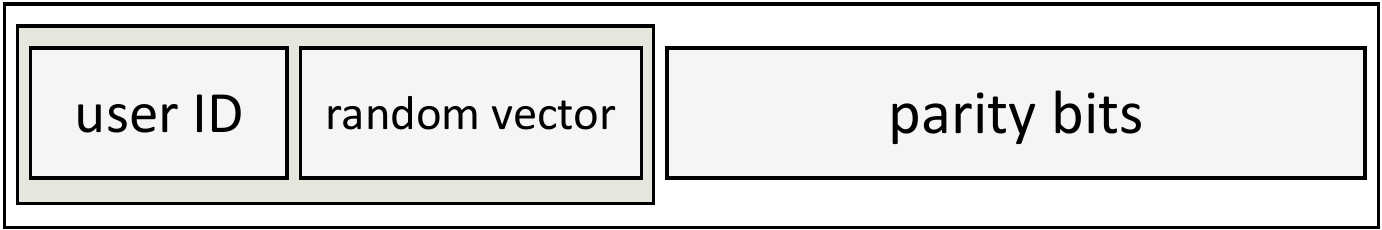}
 \caption{\small Structure of secret codewords. 
 The secret vector of each user is the concatenation of a message vector and the corresponding parity bits obtained using an error-correcting code (ECC). 
 The message vector itself is composed of two parts, 1) a unique binary vector representing the user ID, and 2) a random binary vector chosen by the user. 
 This construction provides the following properties: i) vectors are unique because the user ID is unique, ii) vectors are secret because the random vector is not known to other users or the server, and iii) vectors are guaranteed to be maximally separated due to the use of ECC algorithms.}
\label{fig:secret}
\end{figure}

The length of the base vectors is determined such that the total number of vectors is greater than or equal to the number of users, i.e., $l_b\geq \log_2K$. In practice, the server can set $l_b \gg \log_2K$ so that new users can be added to the training after training started. 
In experiments, we set $l_b = 32$, which is sufficient for most practical purposes. 
The code length is also determined by the server based on the number of users and the desired minimum distance obtained according to the estimated difficulty of the task. Using larger codewords improves the performance of the model but also increases the training complexity and communication cost of the \fedavg method. 
The proposed method has the following properties. 
\begin{itemize}[itemsep=4pt,leftmargin=0.5cm]
    \item It ensures that codewords are unique, because the base vectors $b_u$'s and, in turn, $m_u$'s are unique for all users. Moreover, due to the use of ECCs, the minimum distance between codewords are guaranteed to be more than a threshold determined by the code characteristics. 
    
    \item The final codewords are not shared among users or with the server. Moreover, there are $2^{l_r}$ vectors for each user to choose their codeword from. Increasing $l_r$ improves the method in that it makes it harder to guess the user's codeword but reduces the minimum distance of the code for a given code length.\footnote{In ECCs, with the same code length, the minimum distance decreases as the message length increases.}
    In experiments, we set $l_r\geq 32$, which is sufficient for most practical purposes. 
    
    \item The method adds only a small overhead to vanilla FL algorithms. Specifically, the server assigns and distributes unique binary vectors to users and users construct message vectors and compute the codewords. 
\end{itemize}

\noindent{\bf Training.} 
Figure~\ref{fig:model} shows the model structure used in  \feduv method. The model is trained using the \fedavg algorithm and with the loss function 
$\ell_{\mathrm{pos}}=\max(0, 1-\frac{1}{c}v_y^T\sigma(Wg_{\theta}(x)))$, where $\sigma$ is a function that scales its input to have norm of $\sqrt{c}$.

\begin{figure}[t]
\centering
  \includegraphics[width=0.73\linewidth]{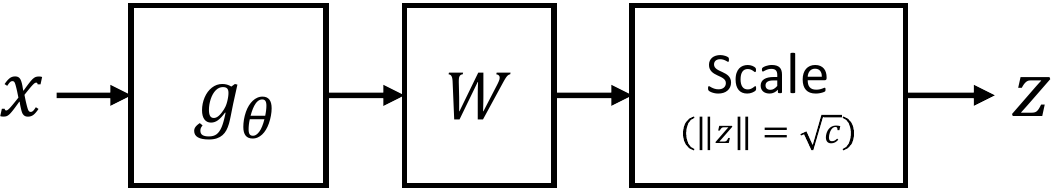}
 \caption{\small Model structure for \feduv.}
\label{fig:model}
\end{figure}

\noindent{\bf Verification.} 
After training, each user deploys the model as a binary classifier to accept or reject test examples. For an input $x'$, the verification is done as
\begin{eqnarray}\label{eq:sim_score}
\frac{1}{c}v_y^T\sigma(Wg_{\theta}(x')) \underset{\tiny \texttt{reject}}{\overset{\tiny \texttt{accept}}{\gtrless}} \tau,
\end{eqnarray}
where $\tau$ is the verification threshold. 
The threshold is determined by each user independently such that they achieve a True Positive Rate (TPR) more than a value, say $q=90\%$. The TPR is defined as the rate that the reference user is correctly verified. To do so, in a warm-up phase, $n$ inputs $x'_j, j\in[n]$, are collected and their corresponding scores are computed as $\frac{1}{c}v_y^T\sigma(Wg_{\theta}(x'_j))$. The threshold is then set such that a desired fraction $q$ of inputs are verified. 

Our proposed framework, \feduv, is described in Algorithm~(\ref{alg:feduv}).

\subsection{Comparing Computational Cost of \feduv and \fedaws}
\feduv has a similar computational cost to \fedaws on the user side as both methods perform regular training of the model on local data (though with different loss functions). On the server side, however, \feduv is more efficient, since, unlike \fedaws, it does not require the server to do any processing beyond averaging the gradients.

\begin{algorithm}[t]
\begin{algorithmic}
 \SUB{Codeword Selection:}
   \STATE {\bf Server:} Send a unique binary vector, $b_u, u\in[K]$, of length $l_b\geq \log_2K$ to user $u$
   \STATE {\bf User $u\in[K]$:} 
    \bindent
    \STATE Choose a random binary vector, $r_u$, of length $l_r$
    \STATE Construct message vector $m_u=b_u\|r_u$
    \STATE Compute codeword $v_u=C(m_u)$ 
    \eindent
    \STATE

 \SUB{Training:}
   \STATE {\bf Server and users:} Train UV model using \fedavg algorithm~(\ref{alg:fedavg}) and with the loss function $\ell_{\mathrm{pos}}=\max(0, 1-\frac{1}{c}v_y^T\sigma(Wg_{\theta}(x)))$
   \STATE
   
 \SUB{Warm-up Phase($\theta, W, v_y, q$):}\ \ \  // \emph{Done by users} 
  \STATE Collect inputs $x'_j, j\in[n],$ and compute the vector $e$ as $e_j=\frac{1}{c}v_y^T\sigma(Wg_{\theta}(x'_j))$
  \STATE Set $\tau$ equal to the $i$-th smallest value in $e$ where $i=\lfloor n\cdot (1-q) \rfloor$
  \STATE
  
 \SUB{Verification($\theta, W, v_y, \tau, x'$):}\ \ \  // \emph{Done by users} 
  \STATE $e=\frac{1}{c}v_y^T\sigma(Wg_{\theta}(x'))$
    \STATE\algorithmicif\ {$e\geq \tau$} \algorithmicthen\  \textsc{Accept} \algorithmicelse\  \textsc{Reject}
\end{algorithmic}
\mycaptionof{algorithm} {Federated User Authentication (\feduv). \\
  $K$: number of users, 
  $C$: block code with code length $c$, 
  $\theta, W$: model parameters, 
  $\sigma$: a function that scales its input to have norm of $\sqrt{c}$, 
  $q$: TPR.}\label{alg:feduv}
\end{algorithm}
       
\section{Related Work}
The problem of training UV models in federated setup has been studied in~\citep{granqvist2020improving} for on-device speaker verification and in~\citep{yu2020federated} as part of a general setting of FL with only positive labels. 
However, to the best of our knowledge, our work is the first to address the problem of training embedding-based classifiers in federated setup with only the positive loss function. 
Our method inherits potential privacy leakage of FL methods, where users' input data might be recovered from a trained model or the gradients~\citep{melis2019exploiting}. It has been suggested that adding noise to gradients or using secure aggregation methods improve the privacy of FL~\citep{mcmahan2017learning,bonawitz2017practical}. Such approaches can be applied to our framework as well. 

Our approach of assigning a codeword to each user is related to distributed output representation~\citep{sejnowski1987parallel}, where a binary function is learned for each bit position. It follows~\citep{hinton1986learning} in that functions are chosen to be meaningful and independent, so that each combination of concepts can be represented by a unique representation. Another related method is distributed output coding~\citep{dietterich1991error,dietterich1994solving}, which uses ECCs to improve the generalization performance of classifiers. 
We, however, use ECCs to enable the training of the embedding-based classifiers with only the positive loss function.

\section{Experimental Results}

\subsection{Datasets} 
\noindent{\bf VoxCeleb}~\citep{Nagrani2017voxceleb} is created for text-independent speaker identification in real environments. The dataset contains $1,251$ speakers' data with $45$ to $250$ number of utterances per speaker, which are generated from YouTude videos recorded in various acoustic environments. 
We selected $1,000$ speakers and generated $25$ training, $10$ validation and $10$ test examples for each speaker. The examples are $2$-second audio clips obtained from videos recorded in one setting. 
We also generated a separate test set of $1,000$ examples by choosing $5$ utterances from $200$ of the remaining speakers that were not selected for training. 
All $2$-second audio files were sampled at $8$ kHz to obtain vectors of length $2^{14}$ for model input. 

\noindent{\bf CelebA}~\citep{liu2015faceattributes} contains more than $200,000$ facial images from $10,177$ unique individuals, where each image has the annotation of $40$ binary attributes and $5$ landmark locations. 
We use CelebA for user verification by assigning the data of each individual to one client and training the model to recognize faces. 
We selected $1,000$ identities from those who had at least $30$ images, which we split into $20$, $5$ and $5$ examples for training, validation, and test sets, respectively. We also generated a separate test set with $1,000$ images from individuals that were not selected for training (one example per person). All images were resized to $64\times64$.

\noindent{\bf MNIST-UV.} We created MNIST-UV dataset for user verification based on handwriting recognition. MNIST-UV examples are generated using the EMNIST-byclass dataset~\citep{cohen2017emnist}, which contains $814,255$ images from $62$ unbalanced classes ($10$ digits and $52$ lower- and upper-case letters) written by $3,596$ writers. 
A version of this dataset, called FEMNIST, has been used to train a 62-class classifier in federated setup by assigning the data of each writer to one client~\citep{caldas2018leaf}. 
In FEMNIST, the difference in handwritings is used to simulate the non-iid nature of the clients’ data in federated setup. 

We repurpose EMNIST for the task of user verification by training a classifier that recognizes the handwritings., i.e., similar to FEMNIST, the data of each writer is assigned to one client but the model is trained to predict the {\it writer IDs}. 
To this end, we created MNIST-UV dataset that contains data of $1,000$ writers each with $50$ training, $15$ validation, and $15$ test examples. 
Each example in the dataset is of size $28\times 28 \times 4$ and is composed of images of digits $2,3,4$ and $5$ obtained from one writer. For each writer, the training examples are unique; however, the same sub-image (images of digits $2,3,4$ or $5$) might appear in several examples. This also holds for validation and test sets. The sub images are, however, not shared between training, validation, and test sets. 
We also generated a separate test set with $1,000$ examples from writers that were not selected for training (one example per writer). 
Figure~\ref{fig:mnist-uv} shows a few examples of the MNIST-UV dataset. Note that, in figure, sub-images are placed in a $2\times 2$ grid for clarity.

\subsection{Experiment Settings}
\noindent{\bf Generating codewords.}
We use BCH coding algorithm to generate codewords. The BCH coding is chosen because it provides the codes with a wide range of the message and code lengths. The choice of the coding algorithm is, however, not crucial to our work and our method works with other ECC algorithms as well. 
We generated codewords of lengths $127, 255$ and $511$, where the code lengths are chosen to be smaller than the number of users ($1,000$) to emulate the setting with a very large number of users. For each code length, we find the message length of greater than or equal to $64$ that produces a valid code. Table~\ref{tab:BCH} shows the code statistics.

\begin{figure}[t]
\centering
  \includegraphics[width=0.99\linewidth]{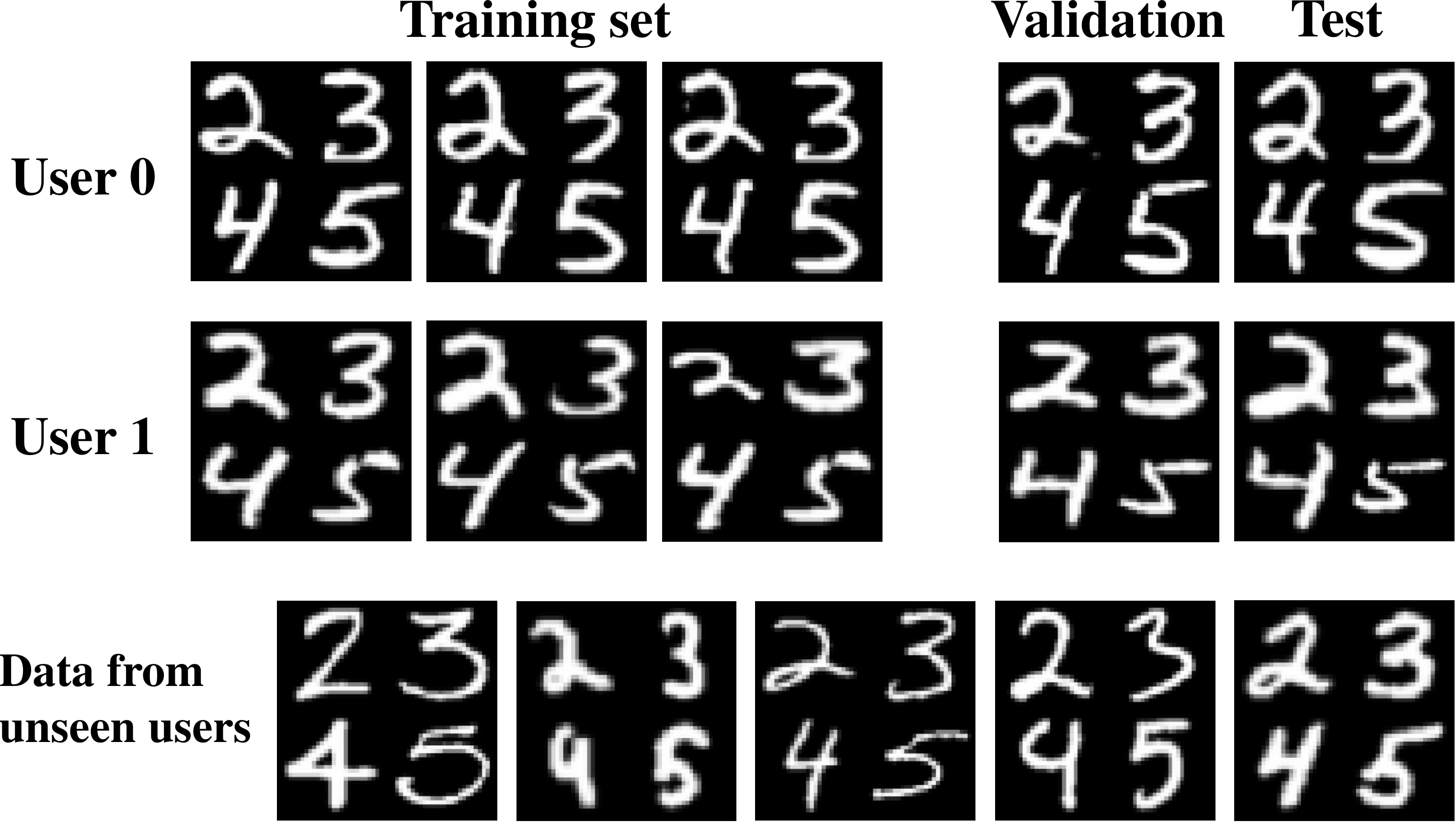}
 \caption{\small Examples from MNIST-UV dataset created for user verification by handwriting.
 Each example in the dataset is of size $28\times 28 \times 4$ and is composed of images of digits $2,3,4$ and $5$ obtained from one writer. In figure, sub-images are placed in a $2\times 2$ grid for clarity. 
 MNIST-UV dataset contains data of $1,000$ writers each with $50$ training, $15$ validation, and $15$ test examples. It also contains a separate test set with $1,000$ examples from writers that were not selected for training (one example per writer). 
 }
\label{fig:mnist-uv}
\end{figure}

\begin{table}[h]
\caption{\small Statistics of BCH codewords used in experiments.}\label{tab:BCH}
\centering
\vspace{-0.1cm}
\begin{tabular}{ |c|c|c| } 
\hline
{\small \bf Code length} & {\small \bf Message length} & {\small \bf $d_{\min}$} \\ \hline
\hline
$127$ & $64$ & $21$ \\ \hline
$255$ & $71$ & $59$ \\ \hline
$511$ & $67$ & $175$ \\ \hline
\end{tabular}
\end{table}

\begingroup
\renewcommand{\arraystretch}{1.01} 
\begin{table*}[t]
\caption{\small Network architectures for training UV models with different datasets. 
conv$\gamma$d$(c1,c2,k,p)$ is an $\gamma$-dimensional convolutional layer with $c1$ and $c2$ input and output channels, respectively, kernel size of $k$ and padding of $p$. The default value of $p$ is $1$. 
GN$(g)$ is a group normalization layer with $g$ groups. 
\texttt{Scaling} layer scales its input to have the norm of $\sqrt{c}$.
$c$ is the code length in case of \feduv and the number of users in \softmax and \fedaws algorithms.
}\label{tab:model}
\centering
\vspace{-0.1cm}
\begin{tabular}{ |p{4.1cm}|p{4.1cm}|p{4.1cm}| } 
\hline
{\bf \small VoxCeleb} & {\bf \small CelebA} & {\bf \small MNIST-UV} \\
\hline
{\small conv1d$(1, 64, k=15)$ \newline relu, maxpool1d$(4)$, GN$(2)$ \vspace{0.0cm} \newline
conv1d$(64, 128, k=9)$ \newline relu, maxpool1d$(8)$, GN$(2)$ \vspace{0.0cm} \newline
conv1d$(128, 256, k=7)$ \newline relu, maxpool1d$(8)$, GN$(2)$ \vspace{0.0cm} \newline
conv1d$(256, 512, k=5)$ \newline relu, maxpool1d$(8)$, GN$(2)$ \vspace{0.0cm} \newline
conv1d$(512, 1024, k=3)$ \newline relu, maxpool1d$(8)$, GN$(2)$ \vspace{0.0cm} \newline
Flatten \vspace{0.0cm} \newline
FC$(1024, c)$ \vspace{0.0cm} \newline
{Scaling} // for \feduv} 
& 
{\small conv2d$(3, 64, k=3)$ \newline relu, maxpool2d$(2)$, GN$(2)$ \vspace{0.0cm} \newline
conv2d$(64, 128, k=3)$ \newline relu, maxpool2d$(2)$, GN$(2)$ \vspace{0.0cm} \newline
conv2d$(128, 256, k=3)$ \newline relu, maxpool2d$(2)$, GN$(2)$ \vspace{0.0cm} \newline
conv2d$(256, 512, k=3)$ \newline relu, maxpool2d$(2)$, GN$(2)$ \vspace{0.0cm} \newline
conv2d$(512, 1024, k=3)$ \newline relu, maxpool2d$(4)$, GN$(2)$ \vspace{0.0cm} \newline
Flatten \vspace{0.0cm} \newline
FC$(1024, c)$ \vspace{0.0cm} \newline
{Scaling} // for \feduv} 
& 
{\small conv2d$(4, 64, k=3, p=3)$ \newline relu, maxpool2d$(2)$, GN$(2)$ \vspace{0.0cm} \newline
conv2d$(64, 128, k=3)$ \newline relu, maxpool2d$(2)$, GN$(2)$ \vspace{0.0cm} \newline
conv2d$(128, 256, k=3)$ \newline relu, maxpool2d$(2)$, GN$(2)$ \vspace{0.0cm} \newline
conv2d$(256, 512, k=3)$ \newline relu, maxpool2d$(2)$, GN$(2)$ \vspace{0.0cm} \newline
conv2d$(512, 1024, k=3)$ \newline relu, maxpool2d$(2)$, GN$(2)$ \vspace{0.0cm} \newline
Flatten \vspace{0.0cm} \newline
FC$(1024, c)$ \vspace{0.0cm} \newline
{Scaling} // for \feduv} 
\\
\hline
\end{tabular}
\end{table*}
\endgroup

\noindent{\bf Baselines.}
We compare our \feduv method with the \fedaws algorithm~\citep{yu2020federated}, and the regular federated learning method, where each user is assigned to one class and the model is trained with the softmax cross-entropy loss function. We refer to this method as \softmax algorithm. 
Note that \softmax method shares the embedding of each user with other users and the server, while \fedaws share the embeddings with the server. 
Similar to \feduv, we perform a warm-up phase for the two baselines to determine the verification threshold for each user. 

\noindent{\bf Training setup.} 
We train the UV models using the \fedavg method with $1$ local epoch and $20,000$ rounds with $0.01$ of users selected at each round. 
Table~\ref{tab:model} provides the network architectures used for each dataset. In models, we use Group Normalization (GN) instead of batch-normalization (BN) following the observations that BN does not work well in non-iid data setting of federated learning~\citep{hsieh2019non}. 
Models are trained with SGD optimizer with learning rate of $0.1$ and learning rate decay of $0.01$.

\subsection{Training with and without $\ell_{\mathrm{neg}}$}\label{sec:sec-lneg}
Figure~\ref{fig:l_neg} shows the training and test accuracy of the \feduv method with and without $\ell_{\mathrm{neg}}$ for the MNIST-UV dataset. In this figure, for the sake of simplicity, we show the accuracy rather than the TPR and FPR. 
As can be seen, using $\ell_{\mathrm{neg}}$ results in a better accuracy at early epochs but does not have significant impact on the final accuracy. 
The experiment confirms the result of the Theorem~(\ref{th:ECC}) that, by choosing the secret vectors from ECC codewords, the negative loss term in~(\ref{eq:UVloss}) becomes redundant when $\ell_{\mathrm{pos}}$ is small. 

\begin{figure}[h]
\centering
  \includegraphics[width=0.99\linewidth]{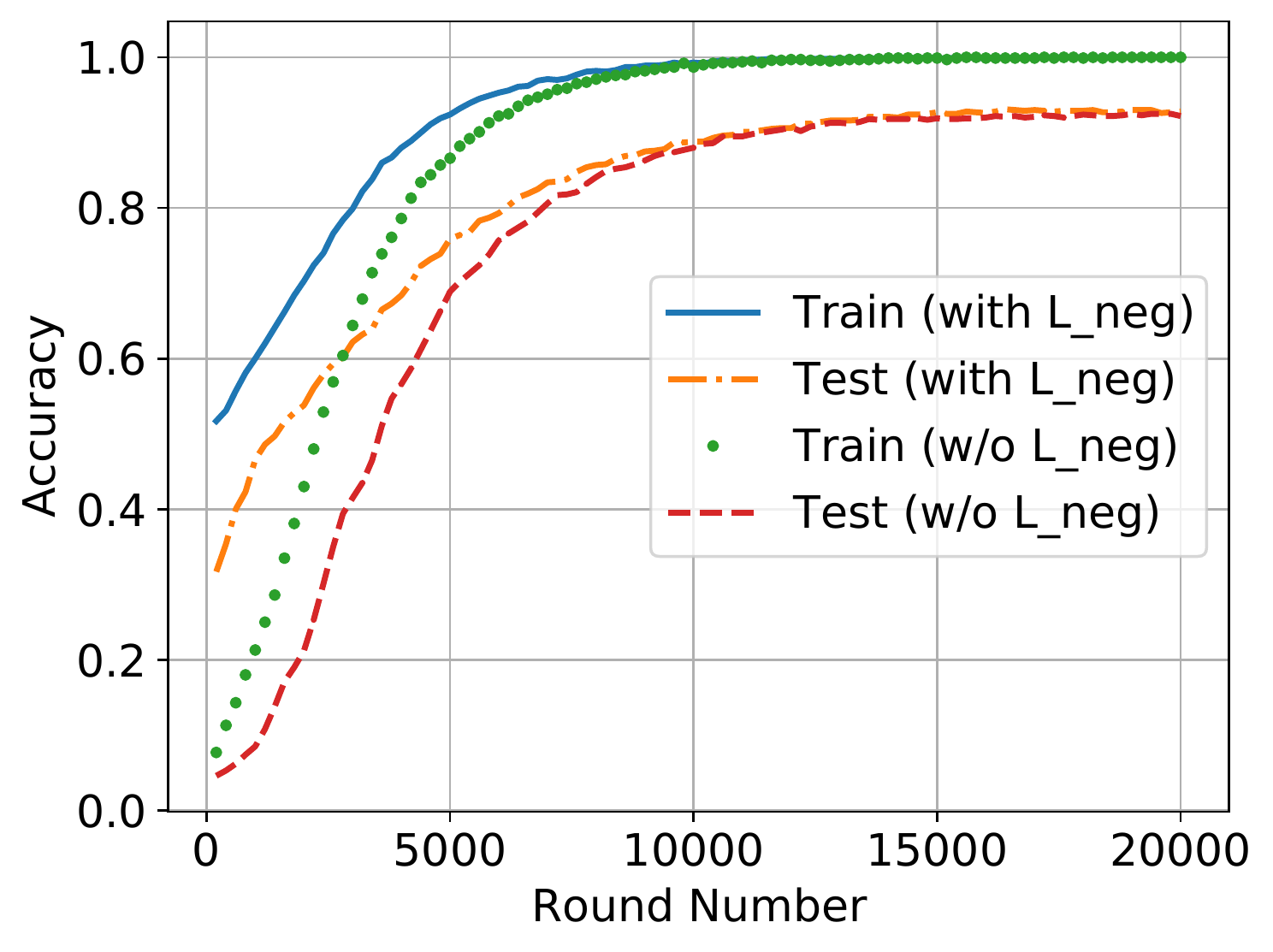}
 \caption{\small Training and test accuracy of the \feduv method with and without $\ell_{\mathrm{neg}}$ for the MNIST-UV dataset. Using $\ell_{\mathrm{neg}}$ results in a better accuracy at early epochs but does not have significant impact on the final accuracy.}
\label{fig:l_neg}
\end{figure}

\subsection{Verification Results}
We evaluate the verification performance on three datasets, namely 1) training data, 2) test data of users who participated in training, and 3) data of users who did not participate in training. 
Figure~\ref{fig:results} shows the ROC curves.
The verification performance is best on training data and slightly degrades when the model is evaluated on test data of users who participated in training and further reduces on data of new users. 
All methods, however, achieve notably high TPR, e.g., greater than $80\%$, at low False Positive Rates (FPRs) of smaller than $10\%$, implying that the trained UV models can reliably reject the impostors. 
The regular \softmax training outperforms both \fedaws and \feduv algorithms in most cases, especially at high TPRs of greater than $90\%$. \feduv's performance is on par with \fedaws, while not sharing the embedding vectors with the server. 
Also, as expected, increasing the code length in \feduv improves the performance.

\begin{figure*}[t]
\centering
{\bf  \hspace{0.65cm} VoxCeleb dataset \hspace{2.8cm} CelebA dataset\hspace{2.75cm} MNIST-UV dataset} \vspace{0.15cm} \\ 
 \begin{subfigure}[b]{0.295\linewidth}
  \includegraphics[width=\linewidth]{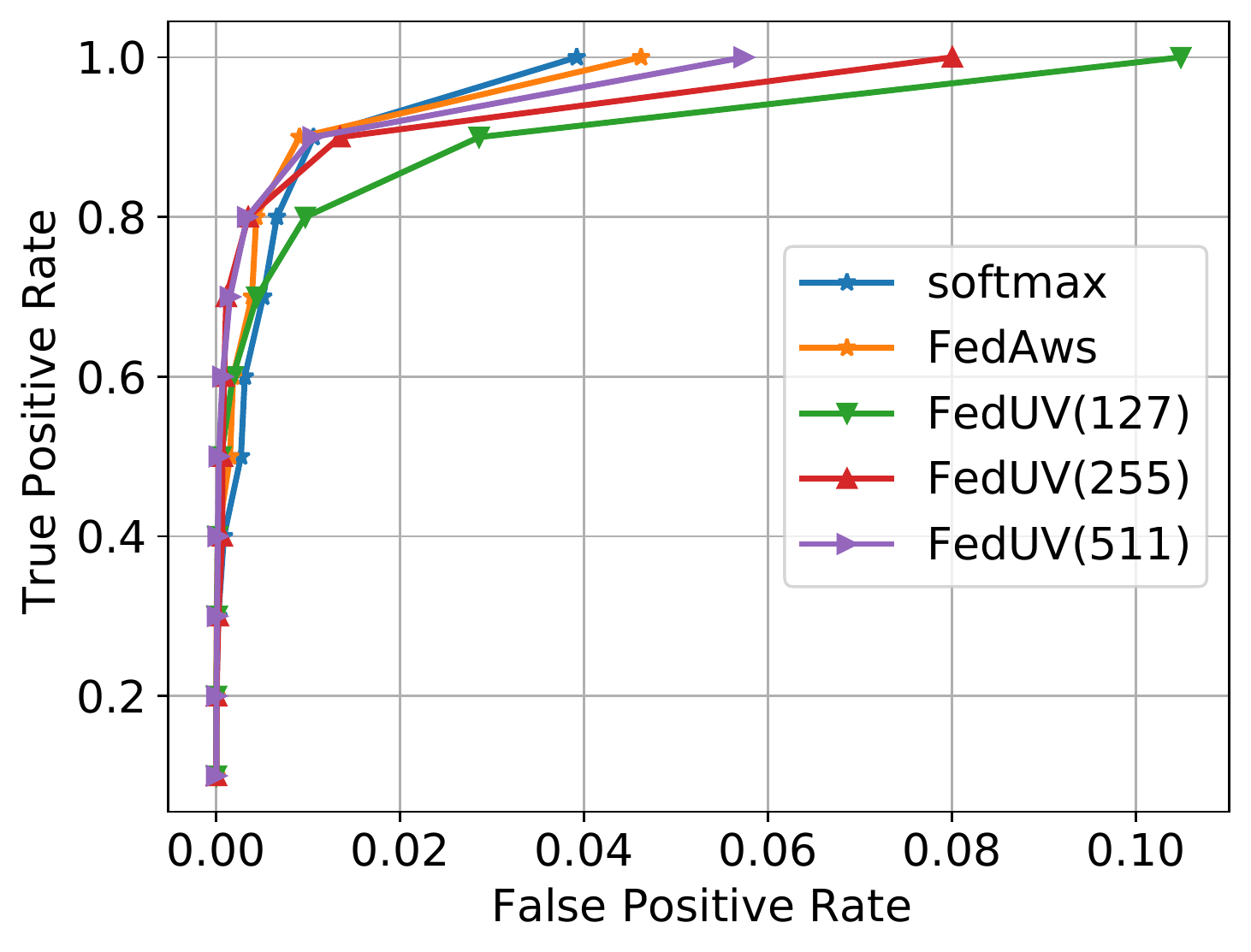}
  \vspace{0.175cm}
 \end{subfigure}\hspace{0.3cm}
 \begin{subfigure}[b]{0.295\linewidth}
  \includegraphics[width=\linewidth]{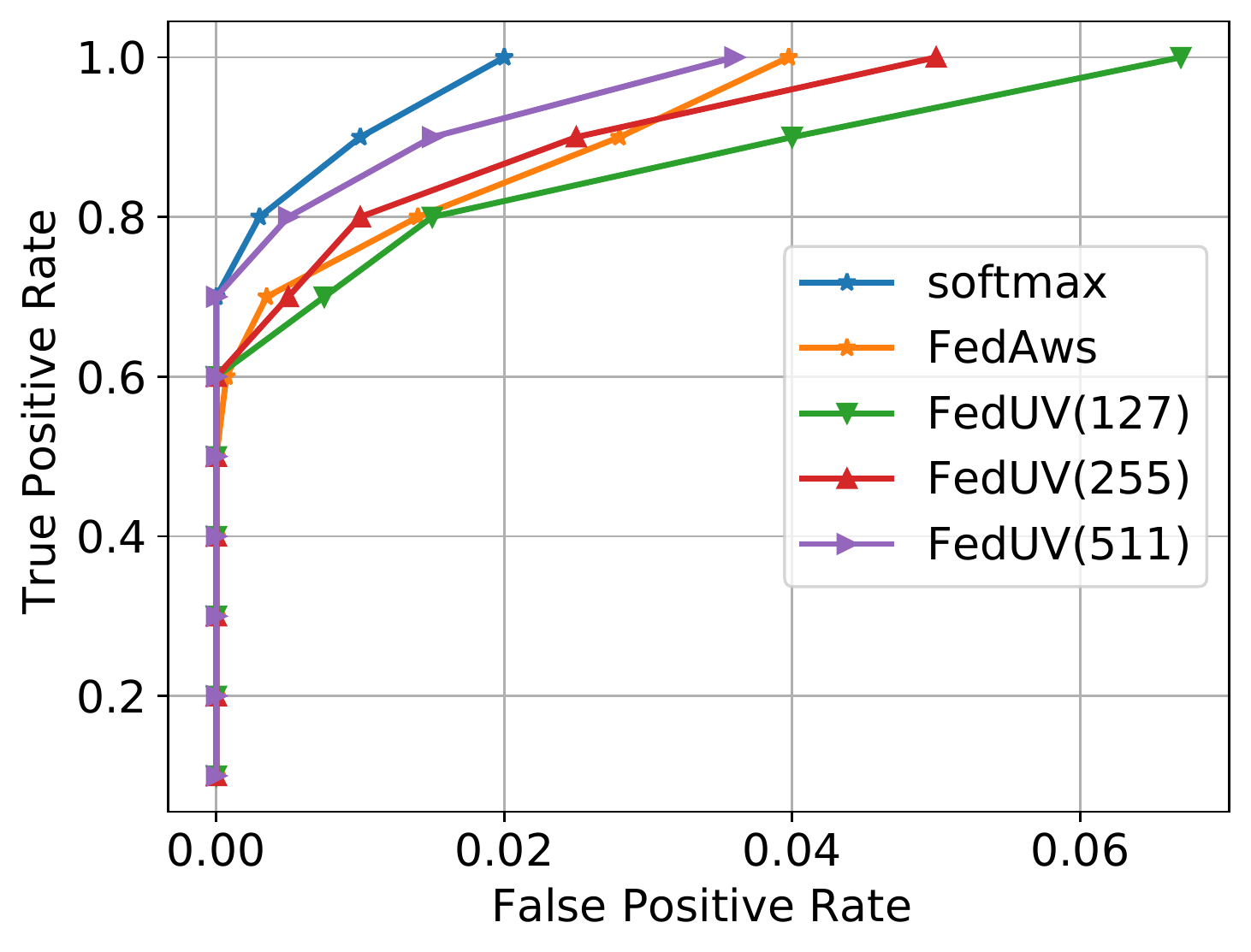}
 \caption{Training set}
 \end{subfigure}\hspace{0.3cm}
 \begin{subfigure}[b]{0.295\linewidth}
  \includegraphics[width=\linewidth]{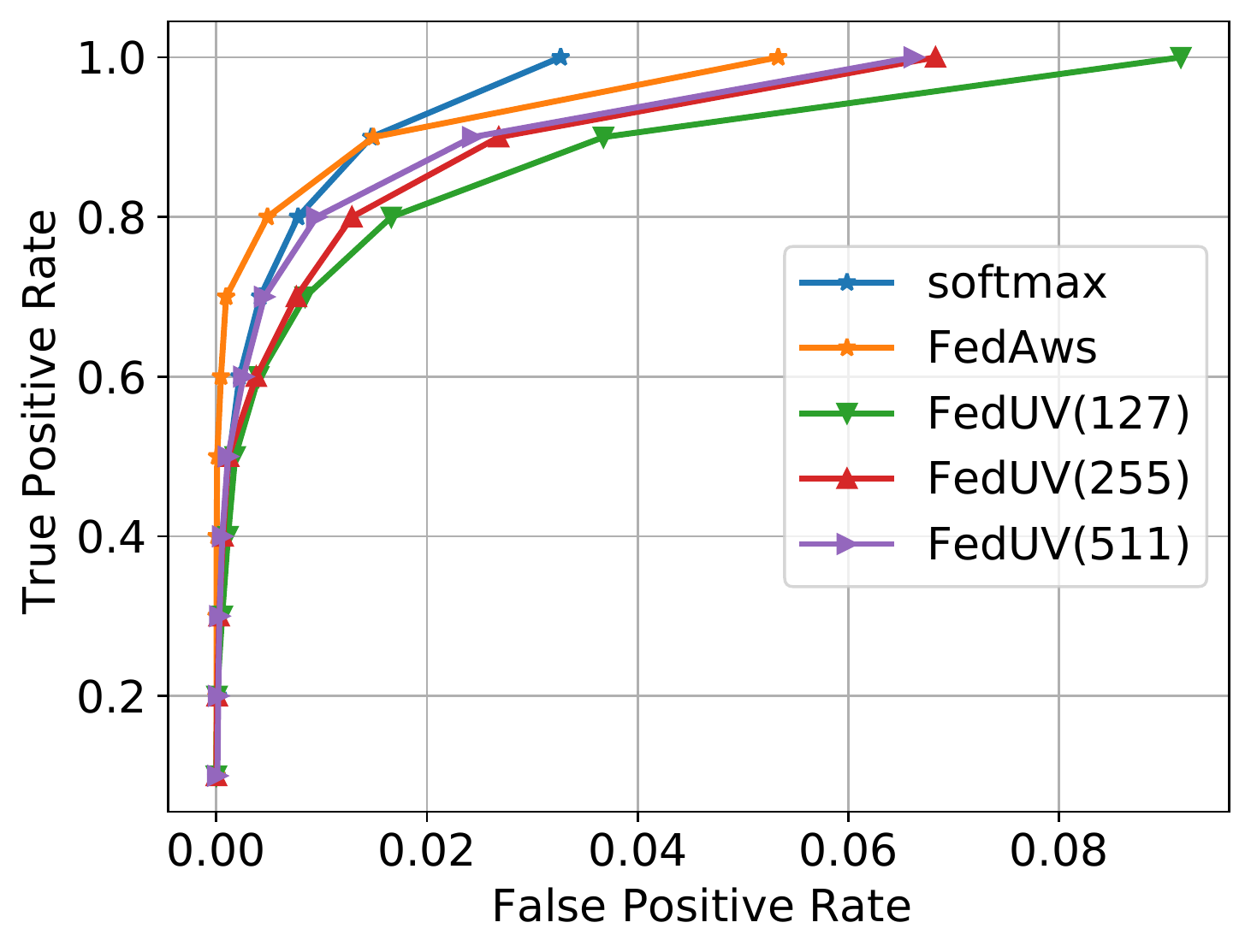}
 \vspace{0.175cm}
 \end{subfigure}\vspace{0.2cm} 
 \begin{subfigure}[b]{0.295\linewidth}
  \includegraphics[width=\linewidth]{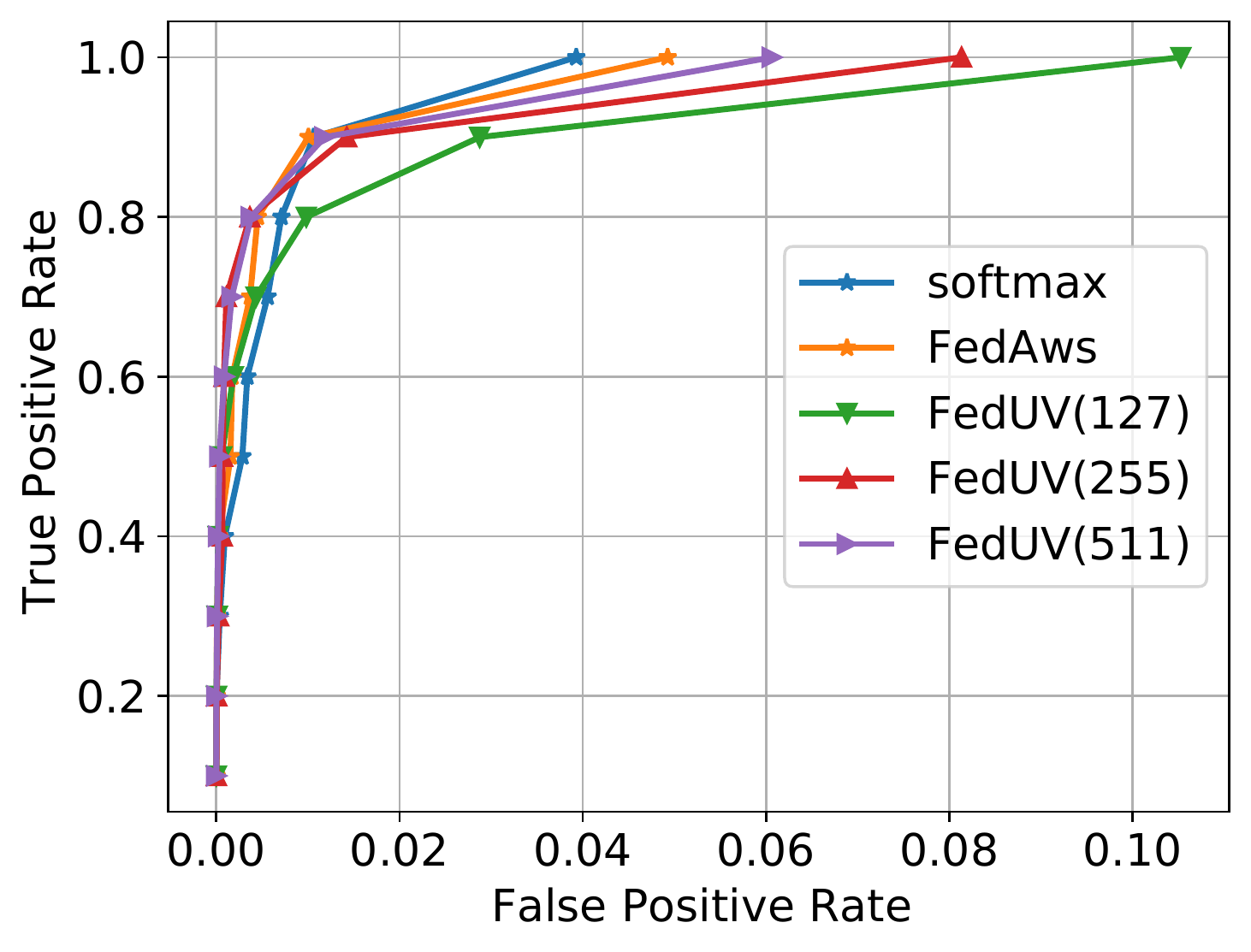}
 \vspace{0.175cm}
 \end{subfigure}\hspace{0.3cm}
 \begin{subfigure}[b]{0.295\linewidth}
  \includegraphics[width=\linewidth]{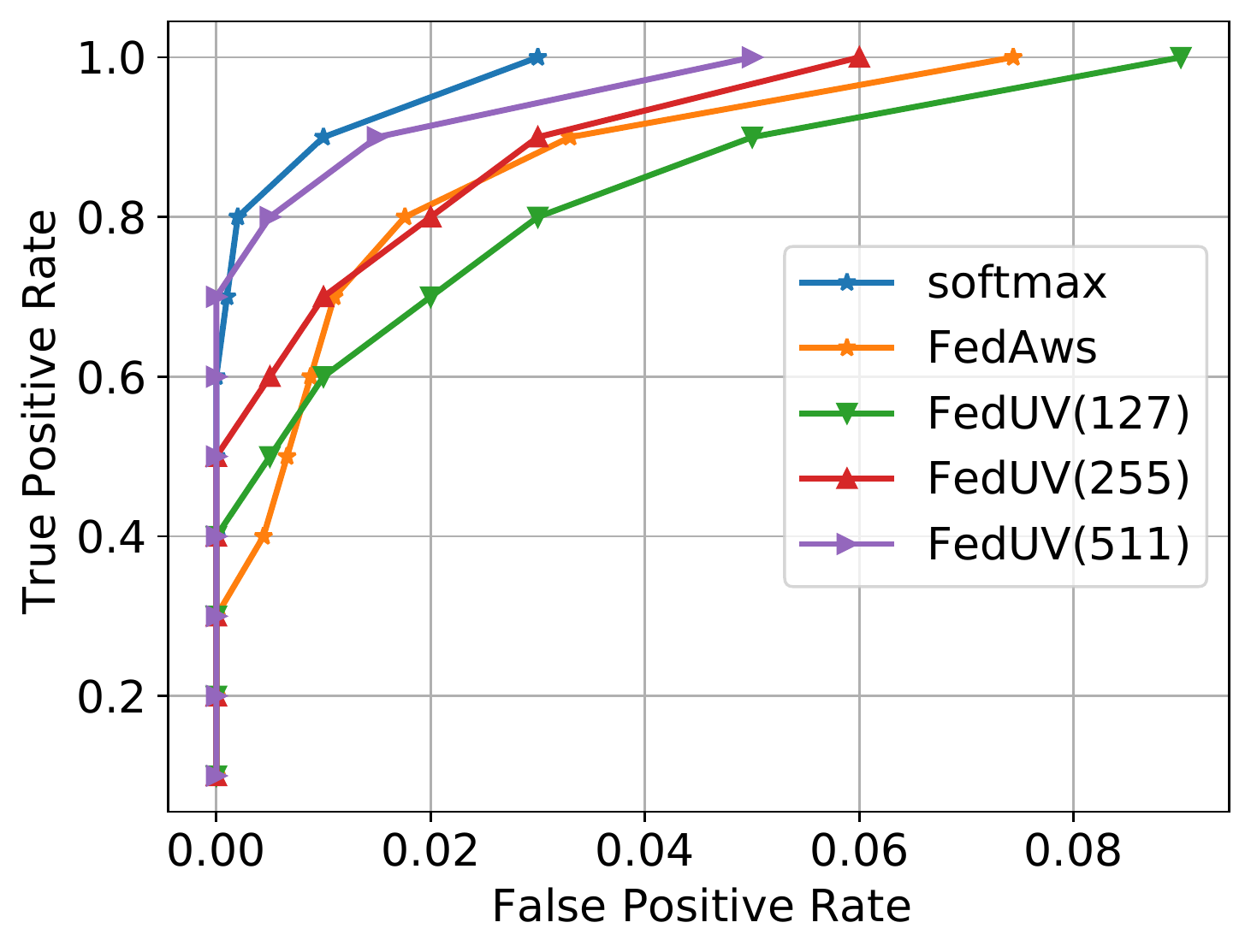}
  \caption{Test set with known users}
 \end{subfigure}\hspace{0.3cm}
 \begin{subfigure}[b]{0.295\linewidth}
  \includegraphics[width=\linewidth]{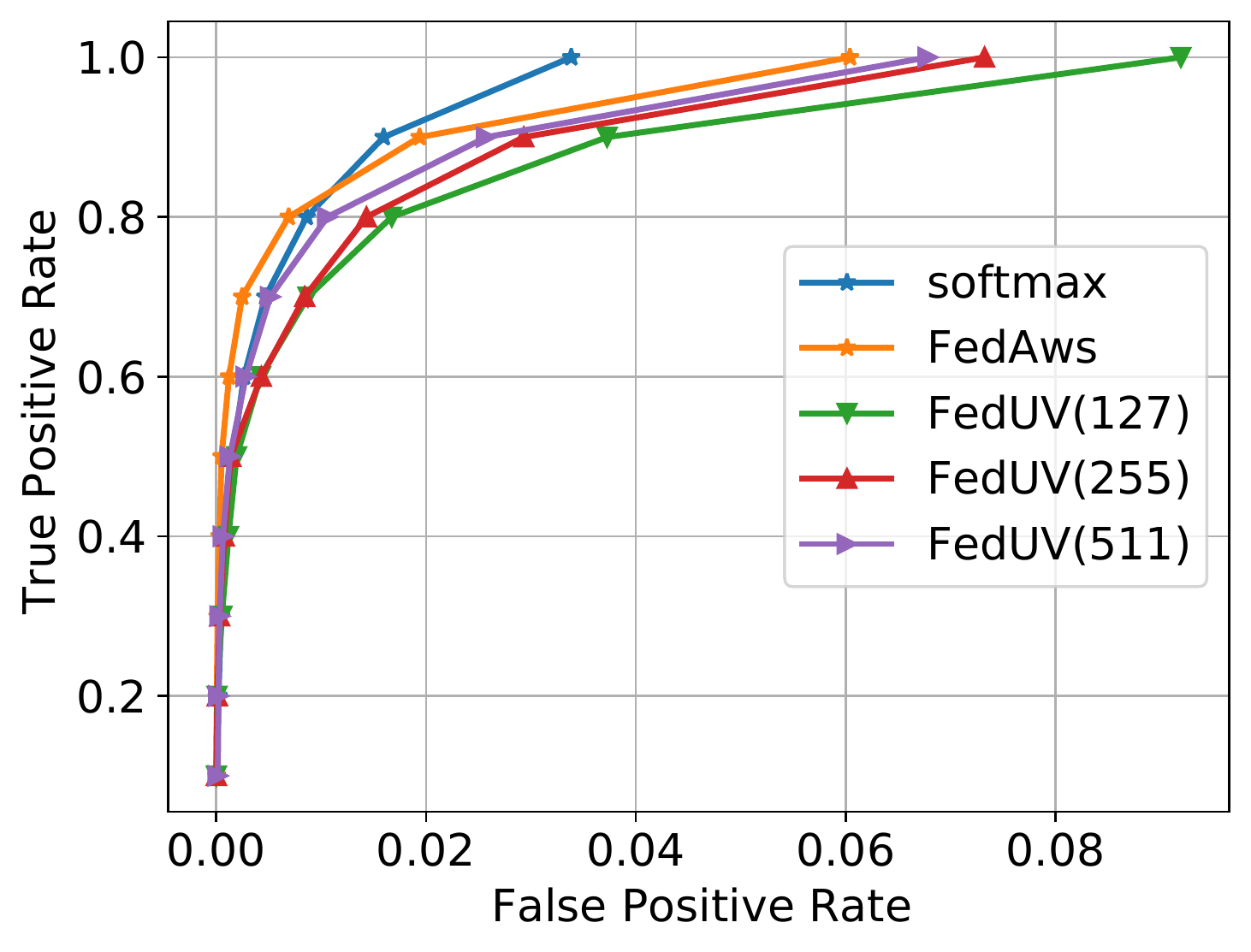}
 \vspace{0.175cm}
 \end{subfigure}\vspace{0.2cm}
 \begin{subfigure}[b]{0.295\linewidth}
  \includegraphics[width=\linewidth]{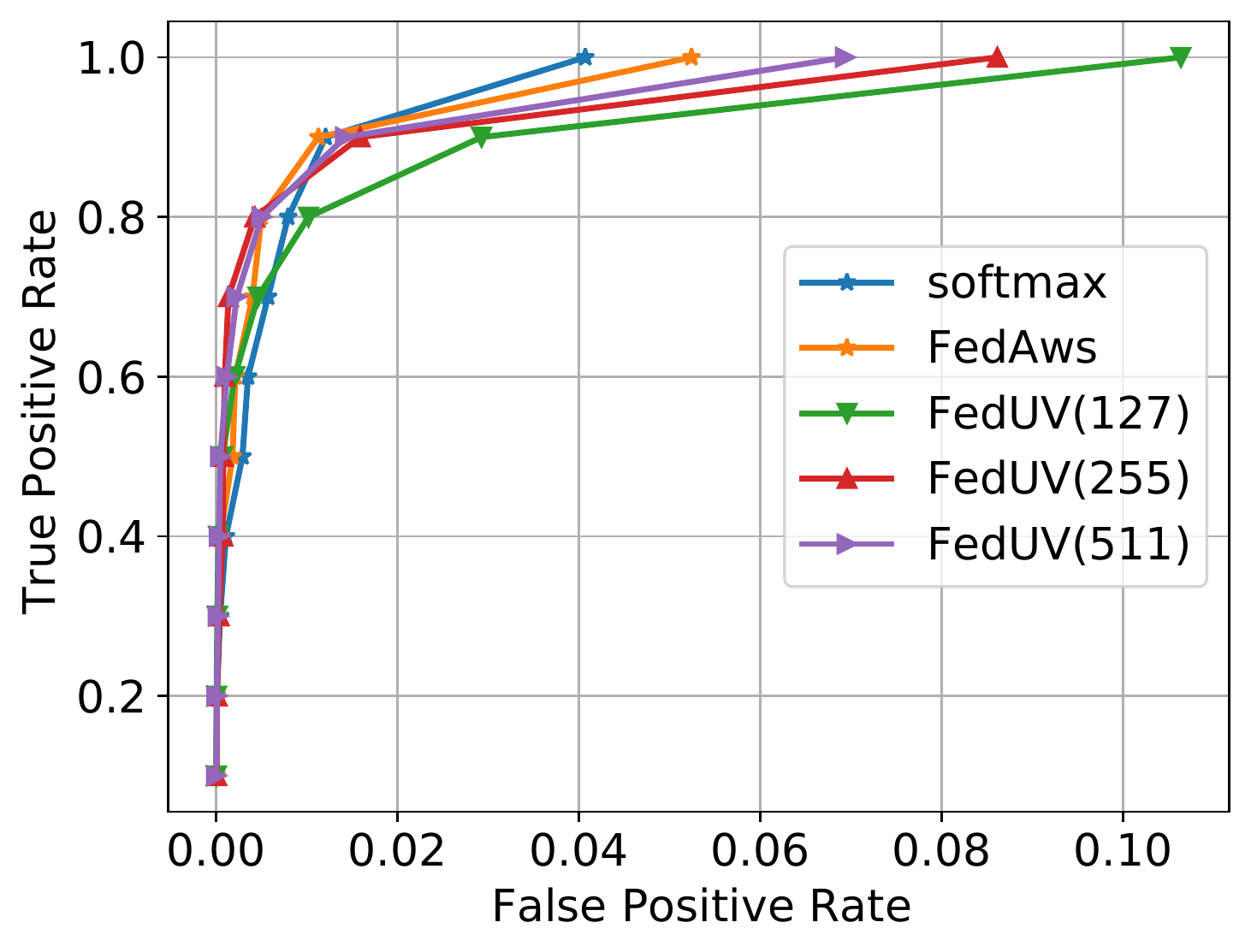}
 \vspace{0.175cm}
 \end{subfigure}\hspace{0.3cm}
 \begin{subfigure}[b]{0.295\linewidth}
  \includegraphics[width=\linewidth]{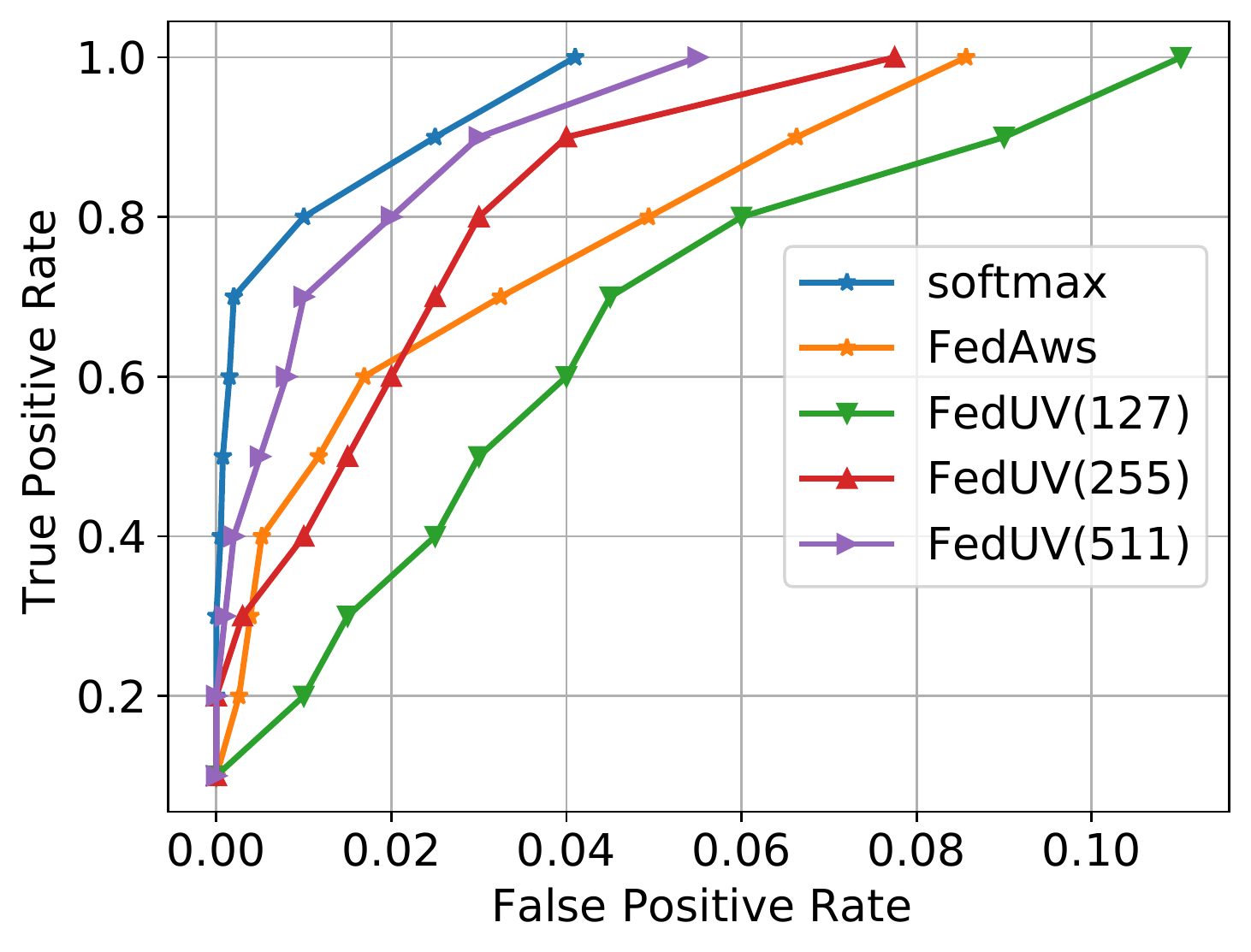}
  \caption{Test set with unknown users}
 \end{subfigure}\hspace{0.3cm}
 \begin{subfigure}[b]{0.295\linewidth}
  \includegraphics[width=\linewidth]{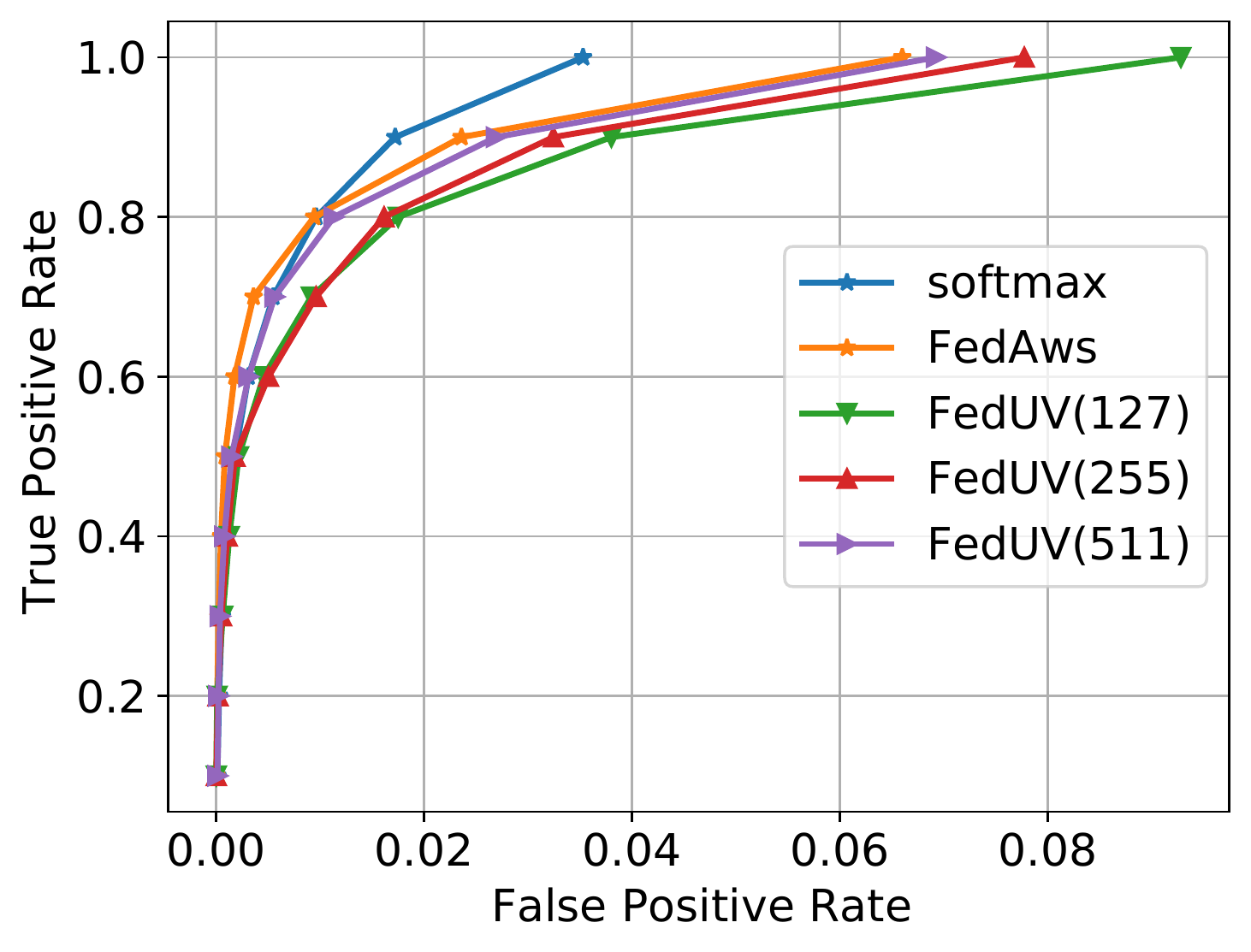}
 \vspace{0.175cm}
 \end{subfigure}
 \vspace{-0.1cm}
\caption{\small ROC curves for models trained in federated setup using \softmax, \fedaws and \feduv algorithms. \feduv$(c)$ denotes \feduv with code length of $c$. 
It can be seen that \feduv performs on par with \fedaws, while \softmax outperforming both methods. Also, as expected, increasing the code length improves the performance of \feduv algorithm. Note that, unlike \feduv, \softmax and \fedaws share embeddings with other users and/or the server.}
\label{fig:results}
\end{figure*}

\section{Conclusion}
We presented \feduv, a framework for training user verification models in the federated setup. 
In \feduv, users first choose unique secret vectors from codewords of an error-correcting code and then train the model using \fedavg method  with a loss function that only uses their own vector. After training, each user independently performs a warm-up phase to obtain their verification threshold.
We showed our framework addresses the problem of existing approaches where embedding vectors are shared with other users or the server. 
Our experimental results for user verification with voice, face, and handwriting data show \feduv performs on par with existing approaches, while not sharing the embeddings with other users or the server.


\bibliography{Refs}
\bibliographystyle{icml2021}


\end{document}